\documentclass{article}

\usepackage{arxiv}

\usepackage[utf8]{inputenc} 
\usepackage[T1]{fontenc}    
\usepackage{hyperref}       
\usepackage{url}            
\usepackage{booktabs}       
\usepackage{amsfonts}       
\usepackage{nicefrac}       
\usepackage{microtype}      
\usepackage{lipsum}
\usepackage{graphicx}
\graphicspath{ {./images/} }

\usepackage[utf8]{inputenc} 
\usepackage[T1]{fontenc}    
\usepackage{hyperref}       
\usepackage{url}            
\usepackage{booktabs}       
\usepackage{amsfonts}       
\usepackage{nicefrac}       
\usepackage{microtype}      
\usepackage{xcolor}         

\usepackage{graphicx}
\usepackage{algorithm}
\usepackage{amsmath}
\usepackage{amssymb}
\usepackage{float}
\usepackage{pifont}
\usepackage{algpseudocode}
\usepackage{natbib}
\usepackage{hyperref}
\usepackage{multirow}
\usepackage{amsthm}
\newtheorem{proposition}{Proposition}
\usepackage{textcomp}

\title{CHASM: Cross-frequency Harmonized
Axis-Separable Mixing for Spectral Token Operators}

\author{
Pengcheng Fang\textsuperscript{1},
Hongli Chen\textsuperscript{2},
Yuxia Chen\textsuperscript{3},
Tengjiao Sun\textsuperscript{1},
Jiaxin Liu\textsuperscript{4},
Xiaohao Cai\textsuperscript{1, \textdagger}
\\[0.8em]
\textsuperscript{1}University of Southampton \quad
\textsuperscript{2}University of Queensland \quad \\
\textsuperscript{3}Chengdu University of Technology \quad
\textsuperscript{4}University of Illinois Urbana-Champaign
\\[0.4em]
\textsuperscript{\textdagger}Corresponding author.
}

\begin{document}

\maketitle

\begin{abstract}
Spectral token mixers based on Fourier transforms provide an efficient way to
model global interactions in visual feature maps. Existing designs often either
apply filter-wise spectral responses along fixed channel axes, or learn adaptive
frequency-indexed channel mixing without explicitly aligning the channel
directions used across frequencies. We propose CHASM, a Cross-frequency
Harmonized Axis-Separable Mixer, as a structured middle ground. CHASM separates
what should be shared from what should remain frequency-specific: all frequencies
share a learned channel eigenbasis, while each frequency retains its own positive
spectral gains.
The shared basis makes channel directions comparable across the spectrum, whereas the
positive gains preserve local spectral adaptivity. CHASM applies this structured
operator separably along the height and width axes and is used as a drop-in
replacement mixer inside existing backbones. We provide a structural
characterization of the shared-basis operator family and evaluate CHASM through
controlled same-backbone comparisons. Across accelerated MRI reconstruction,
undersampled MRI segmentation, and natural-image reconstruction, CHASM
consistently improves over same-backbone spectral-mixer baselines.
Ablations show that removing the shared-basis constraint weakens performance,
and randomizing coherent sampling geometry substantially reduces the gain,
supporting cross-frequency harmonization as a useful inductive bias for spectral
token operators.
\end{abstract}

\section{Introduction}
\label{sec:introduction}

Spectral token mixers have become a useful mechanism for modeling long-range
interactions in visual feature maps. By transforming features into a frequency
domain, applying learnable operations, and mapping the result back to the
spatial domain, they expose each feature location to global context with a
compact operator~\citep{li2021fourier,guibas2022adaptive,rao2021global,chi2020fast}.
Such designs are especially attractive in inverse imaging and medical-image
analysis, where undersampling, aliasing, and recovery often have explicit
spectral structure~\citep{lustig2007sparse,zbontar2018fastmri}.

Most spectral mixers learn frequency-indexed responses: after a spatial Fourier
transform, a frequency-dependent operation is applied at each retained Fourier
bin. Existing designs often emphasize one of two forms of flexibility.
Filter-wise methods give different frequencies different responses but largely
operate along fixed channel axes. More adaptive spectral mixers allow
frequency-indexed channel transformations, but then each frequency may learn its
own channel directions without an explicit mechanism for aligning those
directions across the spectrum. What is missing is a structured middle ground:
a mixer should learn useful channel directions, make them comparable across
frequencies, and still allow each frequency to respond differently.

We propose CHASM, a Cross-frequency Harmonized Axis-Separable Mixer, to realize
this middle ground. CHASM parameterizes each frequency-indexed channel operator
with a shared learned eigenbasis and a frequency-specific positive gain vector.
The shared basis defines a common channel coordinate system across the spectrum,
whereas the gains preserve frequency-specific spectral adaptivity. In other
words, CHASM harmonizes channel directions without forcing all frequencies to
share identical responses.

CHASM is designed as a drop-in spectral mixer for controlled same-backbone
evaluation. This modular setting isolates the effect of cross-frequency
harmonization from changes to the surrounding architecture. We also provide a
structural characterization of the direct shared-basis operator family,
clarifying the local degree-of-freedom structure of the harmonized spectral
operator.

We evaluate CHASM on accelerated MRI reconstruction, undersampled MRI
segmentation, and natural-image reconstruction. Across CC359, fastMRI, BraTS,
ADE20K, and ImageNet settings, CHASM improves over same-backbone spectral-mixer baselines when inserted into HiFi-Mamba-family, U-Net-, and
ViT-based backbones~\citep{chen2025hifimamba,fang2025hifimambav2,ronneberger2015unet,dosovitskiy2021image}.
Mechanistic ablations further support the proposed design: releasing the
shared-basis constraint weakens performance, and disrupting coherent sampling
geometry with random masks substantially reduces the gain.

\noindent\textbf{Contributions.}
Our main contributions are threefold. 
\begin{itemize}
\item First, we identify cross-frequency harmonization as a useful design principle
for spectral token mixers, addressing the lack of explicit structure tying
frequency-indexed channel operators together. 

\item Second, we propose CHASM, an axis-separable shared-basis mixer whose spectral
core shares learned channel directions across frequencies while retaining
frequency-specific positive gains. 

\item Third, we provide a structural characterization of the corresponding
shared-basis operator family and validate CHASM through controlled
same-backbone comparisons across reconstruction and segmentation benchmarks,
together with ablations that isolate the shared-basis constraint and structured
sampling geometry.
\end{itemize}

\section{Related Work}
\label{sec:related}

\noindent\textbf{FFT-based spectral token mixing.}
Fourier-domain operations provide global mixing beyond local spatial
neighborhoods. Fourier Neural Operators parameterize global convolutional
operators in frequency space~\citep{li2021fourier}, while Fast Fourier
Convolution introduces spectral branches into convolutional networks~\citep{chi2020fast}.
In vision architectures, Global Filter Networks learn frequency-domain filters
for long-range spatial dependencies~\citep{rao2021global}, and Adaptive Fourier
Neural Operators introduce adaptive spectral mixing over selected modes
~\citep{guibas2022adaptive}. These methods demonstrate the effectiveness of spectral representations, but they do not explicitly learn a shared channel coordinate system for
frequency-indexed channel operators. CHASM studies this complementary design
axis by sharing channel directions across frequencies while retaining
frequency-specific spectral gains. Rather than increasing per-frequency flexibility, CHASM restricts the family of
frequency-indexed channel operators to share a learned eigenspace, making
cross-frequency coordination an explicit architectural constraint.

\noindent\textbf{Structured token mixers and replacement operators.}
Recent architectures increasingly separate the token mixer from the surrounding
network design. MLP-Mixer shows that token mixing can be implemented without
convolution or attention~\citep{tolstikhin2021mlp}, and MetaFormer abstracts a
common block structure in which different token mixers can be exchanged
~\citep{yu2022metaformer}. Following this view, CHASM is evaluated as a
drop-in mixer block inside fixed HiFi-Mamba-family, U-Net, and ViT backbones,
rather than as a new end-to-end architecture
~\citep{chen2025hifimamba,fang2025hifimambav2,ronneberger2015unet,dosovitskiy2021image}.

\noindent\textbf{Spectral structure in MRI reconstruction.}
Accelerated MRI reconstruction is naturally tied to frequency-domain structure:
measurements are acquired in undersampled \(k\)-space, and reconstruction must
recover missing spectral information. Classical compressed-sensing MRI exploits
sparsity priors for rapid imaging~\citep{lustig2007sparse}, and fastMRI provides
large-scale benchmarks for learning-based reconstruction~\citep{zbontar2018fastmri}.
Recent MRI backbones further emphasize frequency-aware decomposition and
high-fidelity detail recovery~\citep{chen2025hifimamba,fang2025hifimambav2}.
CHASM is orthogonal to these backbone-level designs, targeting the spectral
mixer itself through shared-basis cross-frequency harmonization.

\begin{figure*}[t]
\centering
\includegraphics[width=\textwidth]{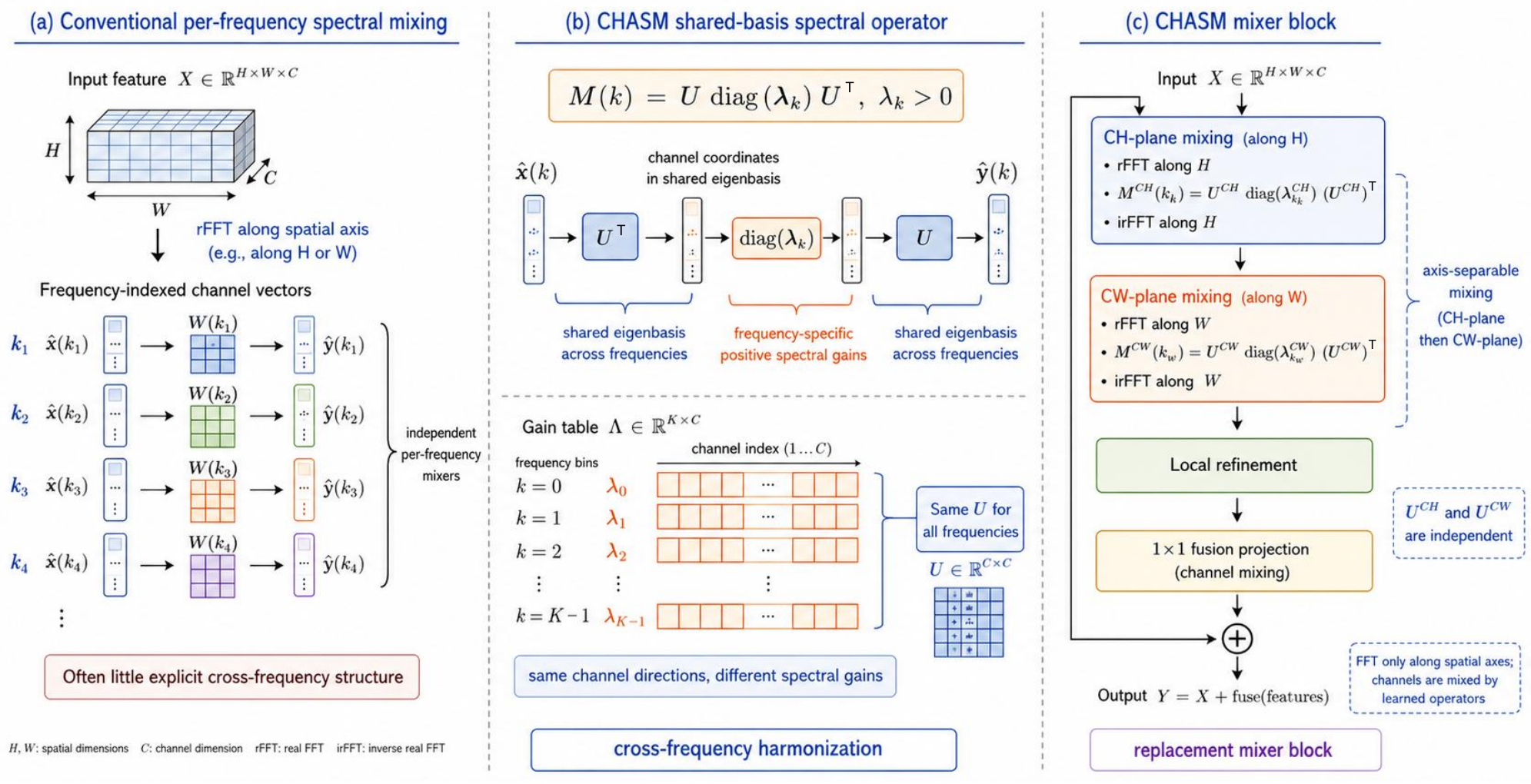}
\caption{
Overview of CHASM.
(a) Standard spectral mixers often learn independent channel operators for
different Fourier bins, with little explicit cross-frequency structure.
(b) CHASM introduces cross-frequency harmonization by sharing a learned channel
eigenbasis \(U\) across frequencies while retaining frequency-specific positive
gains \(\lambda_k\).
(c) The resulting operator is applied separably along the height--channel and
width--channel planes, followed by local refinement and \(1\times1\) fusion,
yielding a drop-in replacement mixer block.
}
\label{fig:chasm_overview}
\end{figure*}

\section{Method}
\label{sec:method}

\subsection{Real-valued representation and axis-wise spectral mixing}
\label{sec:method_prelim}

Let $\mathbf{X}\in\mathbb{R}^{H\times W\times C}$ denote a feature
tensor with height $H$, width $W$, and channel dimension $C$.
CHASM operates on real-valued network features. For complex-valued MRI
inputs, real and imaginary components are stacked along the channel
dimension before being processed by the network; thus $C$ denotes the
resulting real feature-channel dimension.

CHASM uses one-dimensional real Fourier transforms along spatial axes.
For the height--channel plane, abbreviated as the CH-plane, we write
\begin{equation}
    \widehat{\mathbf{X}}^{\mathrm{CH}}[k_h,w,:]
    =
    \mathcal{F}_{H}\bigl(\mathbf{X}[:,w,:]\bigr)[k_h],
    \qquad
    k_h=0,\ldots,K_H-1,
    \label{eq:ch_fft}
\end{equation}
where $\mathcal{F}_{H}$ denotes the real FFT along the height axis and
$K_H=\lfloor H/2\rfloor+1$ is the number of retained frequency bins.
Analogously, the width--channel plane, or CW-plane, gives
$\widehat{\mathbf{X}}^{\mathrm{CW}}[h,k_w,:]$ with
$K_W=\lfloor W/2\rfloor+1$.

At each retained spatial frequency, the mixer applies a channel operator
to the corresponding $C$-dimensional Fourier coefficient:
\begin{equation}
    \widehat{\mathbf{Z}}^{\mathrm{CH}}[k_h,w,:]
    =
    \mathbf{M}^{\mathrm{CH}}(k_h)
    \widehat{\mathbf{X}}^{\mathrm{CH}}[k_h,w,:].
\end{equation}
Although the Fourier coefficient is complex-valued,
\(\mathbf{M}^{\mathrm{CH}}(k_h)\in\mathbb{R}^{C\times C}\) is a real
channel-mixing matrix and is applied by complex linear extension. Equivalently,
the same real matrix acts on the real and imaginary parts of each Fourier
coefficient. This preserves the conjugate symmetry required by the inverse real
Fourier transform. In particular, the DC bin and, when present, the Nyquist bin
remain real-valued because they are multiplied by real matrices. No Fourier
transform is taken along the channel dimension; channels are mixed by learned
linear operators indexed by spatial frequency.

\subsection{Cross-frequency harmonized spectral operator}
\label{sec:chasm_operator}

The central design question is how the channel operators \(\{M(k)\}\) should
vary across frequencies. A frequency-indexed channel operator determines two
aspects of the spectral response: the channel directions being mixed and the
gain applied along those directions. Filter-wise designs keep the channel axes
fixed, whereas fully independent operators allow both directions and gains to
vary with frequency. CHASM separates these roles: channel directions are shared
across frequencies, while spectral gains remain frequency-specific.

For the CH-plane, the harmonized spectral operator is
\begin{equation}
    \mathbf{M}^{\mathrm{CH}}(k_h)
    =
    \mathbf{U}^{\mathrm{CH}}
    \operatorname{diag}\!\bigl(\boldsymbol{\lambda}^{\mathrm{CH}}_{k_h}\bigr)
    \bigl(\mathbf{U}^{\mathrm{CH}}\bigr)^{\top},
    \label{eq:chasm_ch_operator}
\end{equation}
where $\mathbf{U}^{\mathrm{CH}}\in\mathbb{R}^{C\times C}$ is a learned
orthogonal basis shared across all height frequencies, and
$\boldsymbol{\lambda}^{\mathrm{CH}}_{k_h}\in\mathbb{R}_{>0}^{C}$ is a
frequency-specific positive gain vector. The CW-plane uses an independent
basis and response table:
\begin{equation}
    \mathbf{M}^{\mathrm{CW}}(k_w)
    =
    \mathbf{U}^{\mathrm{CW}}
    \operatorname{diag}\!\bigl(\boldsymbol{\lambda}^{\mathrm{CW}}_{k_w}\bigr)
    \bigl(\mathbf{U}^{\mathrm{CW}}\bigr)^{\top}.
    \label{eq:chasm_cw_operator}
\end{equation}
Thus, $\mathbf{U}^{\mathrm{CH}}$ and $\mathbf{U}^{\mathrm{CW}}$ are
separate parameters. The two axes share the same structural principle,
but not the same learned channel basis.

For each axis $a\in\{\mathrm{CH},\mathrm{CW}\}$, the orthogonal basis is
parameterized with exactly $C(C-1)/2$ trainable scalars. Let
$\boldsymbol{\theta}^{a}\in\mathbb{R}^{C(C-1)/2}$ denote the parameters
assigned to the strictly lower triangular entries of a skew-symmetric
matrix $\mathbf{A}^{a}$. We construct
\begin{equation}
    A^{a}_{ij}
    =
    \begin{cases}
    \theta^{a}_{ij}, & i>j,\\
    -\theta^{a}_{ji}, & i<j,\\
    0, & i=j,
    \end{cases}
    \qquad
    \mathbf{U}^{a}
    =
    \exp(\mathbf{A}^{a}).
    \label{eq:matrix_exp}
\end{equation}
The matrix exponential maps skew-symmetric matrices to
$\mathrm{SO}(C)$. The operator family is naturally described using
orthogonal bases in $\mathrm{O}(C)$, but using an $\mathrm{SO}(C)$
representative does not reduce expressivity for the symmetric operators
considered here: if an eigenbasis has determinant $-1$, flipping the
sign of one eigenvector gives a determinant-$1$ basis and leaves
$\mathbf{U}\operatorname{diag}(\boldsymbol{\lambda})\mathbf{U}^{\top}$
unchanged. The exponential coordinates are not globally one-to-one, but
they provide the correct local degrees of freedom for the shared-basis
operator family. We initialize the skew-symmetric parameters to zero, so the learned basis starts
from the identity matrix.

The frequency-specific gains are represented by learnable tables. For
axis $a$, we learn
$\boldsymbol{\Gamma}^{a}\in\mathbb{R}^{B_a\times C}$, where $B_a$ is the
number of learned frequency bins. At a given input resolution, the table
is linearly interpolated along the normalized rFFT frequency coordinate
to the retained frequency length $K_a$, and then mapped to positive gains:
\begin{equation}
    \boldsymbol{\lambda}^{a}_{k}
    =
    \operatorname{softplus}
    \left(
    \operatorname{Interp}
    \bigl(
    \boldsymbol{\Gamma}^{a}
    \bigr)_{k,:}
    \right),
    \qquad
    k=0,\ldots,K_a-1.
    \label{eq:lambda_interp}
\end{equation}
When $B_a=K_a$, this reduces to a direct per-frequency gain table.
The response tables are initialized so that the spectral core starts near
an identity gain. We do not apply per-channel mean normalization.

For a complex Fourier coefficient $\mathbf{v}\in\mathbb{C}^{C}$, the
operator is applied as
\begin{equation}
    \mathbf{M}^{a}(k)\mathbf{v}
    =
    \mathbf{U}^{a}
    \left(
    \boldsymbol{\lambda}^{a}_{k}
    \odot
    \bigl((\mathbf{U}^{a})^{\top}\mathbf{v}\bigr)
    \right),
    \label{eq:operator_application}
\end{equation}
where $\odot$ denotes elementwise multiplication. Since
$\mathbf{U}^{a}$ and $\boldsymbol{\lambda}^{a}_{k}$ are real, the same
operation is applied consistently to the real and imaginary parts. At the
level of the spectral core, each retained-frequency channel operator is
real symmetric with a positive spectrum.

\subsection{CHASM mixer block}
\label{sec:chasm_module}

The proposed operator is used as the spectral core of a complete mixer block.
CHASM applies the harmonized operator separably along the two spatial axes,
factorizing spectral mixing into two one-dimensional frequency-indexed channel
operations over the CH and CW planes. This reduces the parameterization required
for frequency-indexed channel mixing while still enabling global interaction
along both spatial axes through composition.

Given $\mathbf{X}\in\mathbb{R}^{H\times W\times C}$, the CH-plane pass
first computes an rFFT along the height axis, applies
\eqref{eq:chasm_ch_operator} at each height frequency, and returns to the
spatial domain:
\begin{align}
    \widehat{\mathbf{X}}^{\mathrm{CH}}[k_h,w,:]
    &=
    \mathcal{F}_{H}\bigl(\mathbf{X}[:,w,:]\bigr)[k_h],
    \\
    \widehat{\mathbf{Z}}^{\mathrm{CH}}[k_h,w,:]
    &=
    \mathbf{M}^{\mathrm{CH}}(k_h)
    \widehat{\mathbf{X}}^{\mathrm{CH}}[k_h,w,:],
    \\
    \mathbf{Z}[:,w,:]
    &=
    \mathcal{F}_{H}^{-1}
    \bigl(\widehat{\mathbf{Z}}^{\mathrm{CH}}[:,w,:]\bigr).
\end{align}
The CW-plane pass then applies the analogous operation to $\mathbf{Z}$:
\begin{align}
    \widehat{\mathbf{Z}}^{\mathrm{CW}}[h,k_w,:]
    &=
    \mathcal{F}_{W}\bigl(\mathbf{Z}[h,:,:]\bigr)[k_w],
    \\
    \widehat{\mathbf{T}}^{\mathrm{CW}}[h,k_w,:]
    &=
    \mathbf{M}^{\mathrm{CW}}(k_w)
    \widehat{\mathbf{Z}}^{\mathrm{CW}}[h,k_w,:],
    \\
    \mathbf{T}[h,:,:]
    &=
    \mathcal{F}_{W}^{-1}
    \bigl(\widehat{\mathbf{T}}^{\mathrm{CW}}[h,:,:]\bigr).
\end{align}
Thus the default CHASM spectral core is
\begin{equation}
    \Phi_{\mathrm{CHASM}}
    =
    \Phi_{\mathrm{CW}}\circ \Phi_{\mathrm{CH}},
    \label{eq:chasm_composition}
\end{equation}
where the height-axis harmonized mixing is applied before the width-axis
harmonized mixing.

The full CHASM mixer block wraps this spectral core with lightweight
local refinement and channel fusion:
\begin{equation}
    \mathbf{Y}
    =
    \mathbf{X}
    +
    \mathbf{P}_{\mathrm{fuse}}
    \left(
    \mathcal{L}
    \bigl(
    \Phi_{\mathrm{CHASM}}(\mathbf{X})
    \bigr)
    \right),
    \label{eq:chasm_block}
\end{equation}
where $\mathcal{L}$ is a lightweight local convolutional refinement and
$\mathbf{P}_{\mathrm{fuse}}$ is a $1\times1$ fusion projection. The
proposed cross-frequency harmonization is contained in
$\Phi_{\mathrm{CHASM}}$; the local refinement and fusion layers make the operator a practical drop-in
mixer block for existing vision and reconstruction backbones.

Because the two axes use independent bases and frequency responses, their
composition is not merely a duplicated filtering step; the second pass acts on
features already transformed by the first, allowing axis-wise spectral mixing to
interact through the channel space.



\subsection{Structural characterization of the shared-basis operator}
\label{sec:operator_characterization}

We now characterize the structured spectral operator instantiated by one
axis of CHASM. The degree-of-freedom characterization concerns the direct-table version of the operator, in which each retained frequency has its own gain vector.
For $K$ frequency bins and channel dimension $C$, define the positive-gain
shared-basis family
\begin{equation}
    \mathcal{S}_{K,+}^{C}
    =
    \left\{
    \bigl(\mathbf{M}(0),\ldots,\mathbf{M}(K-1)\bigr)
    :
    \mathbf{M}(k)
    =
    \mathbf{U}
    \operatorname{diag}(\boldsymbol{\lambda}_{k})
    \mathbf{U}^{\top},
    \;
    \mathbf{U}\in\mathrm{O}(C),
    \;
    \boldsymbol{\lambda}_{k}\in\mathbb{R}_{>0}^{C}
    \right\}.
    \label{eq:sd_positive_family}
\end{equation}
This is the family of real symmetric channel operators with positive
spectra that are simultaneously diagonalizable by a shared orthogonal
basis. The theoretical characterization concerns this spectral operator
family, not the entire mixer block with local refinement and fusion.

For eigenmode $j$, define its joint spectral signature as
\begin{equation}
    \boldsymbol{s}_{j}
    =
    \bigl(
    \lambda_{0j},
    \lambda_{1j},
    \ldots,
    \lambda_{K-1,j}
    \bigr)
    \in\mathbb{R}_{>0}^{K}.
    \label{eq:joint_signature}
\end{equation}
The generic stratum of $\mathcal{S}_{K,+}^{C}$ is the subset where
$\boldsymbol{s}_{i}\neq \boldsymbol{s}_{j}$ for all $i\neq j$. On this
stratum, the joint eigenspaces are one-dimensional.

\begin{proposition}[Local degrees of freedom of the direct shared-basis operator]
\label{prop:generic_minimality}
On the generic stratum defined by pairwise distinct joint spectral
signatures, the direct-table family $\mathcal{S}_{K,+}^{C}$ is a smooth
manifold of intrinsic dimension
\begin{equation}
    \frac{C(C-1)}{2}+KC.
\end{equation}
The direct shared-basis spectral operator has the same number of local
degrees of freedom: $C(C-1)/2$ for the basis and $KC$ for the
frequency-specific gain table. Thus, the direct shared-basis parameterization matches the intrinsic local dimension of this generic structured family.
\end{proposition}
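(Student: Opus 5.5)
We will realize the generic stratum as the image of a free, proper Lie-group quotient. Define the smooth map
\[
\Psi:\mathrm{O}(C)\times(\mathbb{R}_{>0}^{K})^{C}\to(\mathrm{Sym}_C)^K,\qquad
\Psi\bigl(\mathbf{U},(\boldsymbol{s}_1,\ldots,\boldsymbol{s}_C)\bigr)=\bigl(\mathbf{U}\operatorname{diag}(\boldsymbol{\lambda}_k)\mathbf{U}^\top\bigr)_{k=0}^{K-1},
\]
where $\boldsymbol{\lambda}_k$ collects the $k$-th entries of the signatures. Restrict the domain to the open set $\mathcal{D}$ where the signatures $\boldsymbol{s}_j$ are pairwise distinct. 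The image $\Psi(\mathcal{D})$ is exactly the generic stratum.

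The first step identifies the fibres of $\Psi$ on $\mathcal{D}$. Suppose two parameter points give the same tuple. Then the joint eigenspaces of the commuting family $\{\mathbf{M}(k)\}$ are the lines spanned by the columns of $\mathbf{U}$, because distinct signatures separate the modes. Hence the second basis must be $\mathbf{U}\mathbf{P}\mathbf{D}$, with $\mathbf{P}$ a permutation and $\mathbf{D}$ a diagonal sign matrix, and the signatures are permuted accordingly. So the fibres are exactly the orbits of the finite hyperoctahedral group $G=\{\pm1\}^C\rtimes S_C$, acting by $(\mathbf{U},\boldsymbol{s})\mapsto(\mathbf{U}\mathbf{P}\mathbf{D},\,\mathbf{P}^{-1}\boldsymbol{s})$.

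On $\mathcal{D}$ this action is free. A nontrivial permutation would move the distinct signatures. A pure sign flip $\mathbf{D}\neq\mathbf{I}$ changes $\mathbf{U}$, since $\mathbf{U}\mathbf{D}=\mathbf{U}$ forces $\mathbf{D}=\mathbf{I}$. The action is also proper because $G$ is finite. Therefore $\mathcal{D}/G$ is a smooth manifold of dimension $\dim\mathrm{O}(C)+KC=C(C-1)/2+KC$, and $\Psi$ descends to a continuous injection $\bar\Psi:\mathcal{D}/G\to(\mathrm{Sym}_C)^K$.

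The second step, which we expect to be the main obstacle, is to show that $\bar\Psi$ is a smooth embedding. This needs two facts: $\Psi$ is an immersion, and $\bar\Psi$ is a homeomorphism onto its image with the subspace topology.

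For the immersion, take a tangent vector $(\mathbf{U}\boldsymbol{\Omega},\dot{\boldsymbol{\lambda}})$ with $\boldsymbol{\Omega}$ skew-symmetric. After conjugating by $\mathbf{U}^\top$, the differential becomes
\[
\bigl([\boldsymbol{\Omega},\boldsymbol{\Lambda}_k]+\operatorname{diag}(\dot{\boldsymbol{\lambda}}_k)\bigr)_{k},\qquad \boldsymbol{\Lambda}_k=\operatorname{diag}(\boldsymbol{\lambda}_k).
\]
The commutator has zero diagonal, so the diagonal and off-diagonal parts decouple. The diagonal part vanishing forces $\dot{\boldsymbol{\lambda}}=0$. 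The off-diagonal part has entries $\Omega_{ij}(\lambda_{kj}-\lambda_{ki})$. Since $\boldsymbol{s}_i\neq\boldsymbol{s}_j$, some $k$ has $\lambda_{ki}\neq\lambda_{kj}$, which gives $\Omega_{ij}=0$. So the differential is injective, and this is precisely where genericity enters.

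For the topological part, we show that the inverse is continuous. Given a convergent sequence of images, the $\mathbf{U}$'s lie in the compact group $\mathrm{O}(C)$. The signatures are recovered continuously as eigenvalue data, since $\lambda_{kj}=\mathbf{u}_j^\top\mathbf{M}(k)\mathbf{u}_j$ and they are bounded. Any subsequential limit maps to the limit tuple. Because the limit also has distinct signatures, injectivity modulo $G$ forces convergence in $\mathcal{D}/G$. Alternatively, one can use local continuity of simple joint eigenprojections.

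This gives the manifold statement with the stated dimension. The parameter count is then immediate: the exponential chart uses $C(C-1)/2$ skew entries and the direct table uses $KC$ gains. Locally, $\exp$ is a diffeomorphism near any point onto a neighbourhood in $\mathrm{SO}(C)$ after translation. Moreover, $\mathrm{SO}(C)$ surjects onto the family by the sign-flip remark earlier in the paper, and softplus is a diffeomorphism onto $\mathbb{R}_{>0}$. Hence the parameterization is a local diffeomorphism onto the manifold, and the number of parameters matches its dimension.
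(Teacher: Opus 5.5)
Your proof is correct and follows the paper's skeleton: parameterize by $\mathrm{O}(C)\times\mathbb{R}_{>0}^{K\times C}$, note that on the generic stratum the only ambiguity is the finite signed-permutation group, and conclude that the quotient has dimension $C(C-1)/2+KC$. The paper stops at this count, which strictly gives only the dimension of the parameter quotient. Discrete fibres alone do not show that the image in $(\mathrm{Sym}_C)^K$ is a smooth manifold of that dimension, because the differential could still drop rank. Your two extra steps supply exactly what is missing. The commutator computation reduces the kernel to $\Omega_{ij}(\lambda_{kj}-\lambda_{ki})=0$ for all $k$, so genericity does the infinitesimal work; it also shows why a tie $\boldsymbol{s}_i=\boldsymbol{s}_j$ gives a continuous rotation ambiguity and a lower-dimensional stratum. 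The compactness argument on $\mathrm{O}(C)$ then makes $\bar\Psi$ a homeomorphism onto its image. It is worth adding that the subsequential limit lies in $\mathcal{D}$: its gains are positive because they are eigenvalues of the positive-definite limit $\mathbf{M}(k)$. The paper's version buys brevity, while yours actually establishes the generic stratum as an embedded submanifold. One correction to your last paragraph: the untranslated map $A\mapsto\exp(A)$ used in the implementation is not a local diffeomorphism at every skew $A$. For $C\ge 3$ it is singular, e.g.\ at rotation angle $2\pi$. So state the local-diffeomorphism claim for translated charts $A\mapsto\mathbf{U}_0\exp(A)$, or on the open set where $d\exp_A$ is invertible, which contains a neighbourhood of the zero initialization. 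The parameter count, which is all the proposition asserts, is unaffected.
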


\begin{proof}
The orthogonal group $\mathrm{O}(C)$ has dimension $C(C-1)/2$, and the
positive gain table
$\boldsymbol{\Lambda}\in\mathbb{R}_{>0}^{K\times C}$ contributes $KC$
additional coordinates. Since $\mathbb{R}_{>0}^{K\times C}$ is an open
subset of $\mathbb{R}^{K\times C}$, the positivity constraint does not
change the local dimension. On the generic stratum, the joint spectral
signatures are pairwise distinct, so the shared eigenspaces are
one-dimensional. The remaining ambiguity is discrete: columns of
$\mathbf{U}$ may be sign-flipped, and eigenmode columns may be permuted
together with the corresponding columns of
$\boldsymbol{\Lambda}$. Quotienting by this finite signed-permutation
stabilizer does not change the manifold dimension. Thus the intrinsic
dimension of the generic structured family is
$C(C-1)/2+KC$. This verifies that the direct shared-basis parameterization matches the intrinsic local dimension of the corresponding generic structured family.
\end{proof}

This characterization focuses on the shared-basis spectral operator. The full
CHASM block further includes local refinement and fusion layers. The result
clarifies how the spectral core separates coordination from adaptivity: the
shared basis provides common channel directions across frequencies, while the
gain table preserves frequency-specific responses along those directions.

The practical implementation uses a learned response table with \(B\) frequency
bins and interpolates it to the \(K\) retained rFFT bins of the input. When
\(B<K\), this gives a smooth subparameterization of the direct-table family used
in Proposition~\ref{prop:generic_minimality}. The direct family is used for the
local degree-of-freedom characterization, while the interpolated table is used
in the implemented mixer.

\noindent\textbf{Parameter-level frequency reindexing.}
The shared-basis form has a simple parameter-level response to
frequency-index permutations. Let $\sigma$ be a permutation of
$\{0,\ldots,K-1\}$ and define
\begin{equation}
    (\sigma\cdot\boldsymbol{\Lambda})_{k,:}
    =
    \boldsymbol{\Lambda}_{\sigma^{-1}(k),:}.
\end{equation}
Then
\begin{equation}
    \mathbf{U}
    \operatorname{diag}\bigl((\sigma\cdot\boldsymbol{\Lambda})_{k,:}\bigr)
    \mathbf{U}^{\top}
    =
    \mathbf{M}(\sigma^{-1}(k)).
\end{equation}
Thus, reindexing frequencies only permutes rows of the gain table and
leaves the basis component unchanged. This property describes the parameter-level response of the shared-basis form to
frequency reindexing.

\subsection{Hermitian motivation for positive spectral gains}
\label{sec:hermitian_motivation}

The positive shared-basis form is also compatible with the operator
structure encountered in inverse problems. For a linear measurement
operator $\mathcal{A}$, the normal operator
$\mathcal{A}^{*}\mathcal{A}$ is Hermitian positive semidefinite. In
single-coil Cartesian MRI, for example,
$\mathcal{A}=\mathbf{P}_{\Omega}\mathbf{F}$ gives
\begin{equation}
    \mathcal{A}^{*}\mathcal{A}
    =
    \mathbf{F}^{*}\mathbf{P}_{\Omega}^{*}\mathbf{P}_{\Omega}\mathbf{F},
    \label{eq:single_coil_normal}
\end{equation}
where $\mathbf{F}$ is the Fourier transform and $\mathbf{P}_{\Omega}$ is
the sampling mask. In multi-coil MRI with coil sensitivities
$\{\mathbf{S}_{c}\}$, the normal operator has the form
\begin{equation}
    \mathcal{A}^{*}\mathcal{A}
    =
    \sum_{c}
    \mathbf{S}_{c}^{*}
    \mathbf{F}^{*}
    \mathbf{P}_{\Omega}^{*}\mathbf{P}_{\Omega}
    \mathbf{F}
    \mathbf{S}_{c}.
    \label{eq:multi_coil_normal}
\end{equation}
This observation motivates the positive shared-basis spectral core at the
feature level. CHASM acts on learned feature channels, where the same
Hermitian-compatible structure provides a useful inductive bias. Under complex linear extension,
$\mathbf{U}\operatorname{diag}(\boldsymbol{\lambda})\mathbf{U}^{\top}$
is Hermitian, and with $\boldsymbol{\lambda}>0$ it has a positive
spectrum at each retained frequency. This is compatible with, but not a
parameterization of, the Hermitian positive-semidefinite normal operators
that arise in inverse problems. This property belongs to the harmonized
spectral core; the full CHASM block further includes local refinement,
fusion, and residual components.

\subsection{Parameters and computational cost}
\label{sec:method_complexity}

For one axis with $K$ retained frequency bins, the direct shared-basis
spectral operator has
\begin{equation}
    {C(C-1)}/{2} + KC
\end{equation}
trainable spectral-operator parameters. In the interpolated implementation,
the learned gain table has $B$ frequency bins and is interpolated to $K$
retained bins, giving
\begin{equation}
    {C(C-1)}/{2} + BC
\end{equation}
trainable spectral-operator parameters for that axis. For the two-axis
CHASM spectral core, this becomes
\begin{equation}
    C(C-1) + (B_H+B_W)C.
    \label{eq:chasm_core_param_count}
\end{equation}
The full CHASM mixer block additionally includes the local refinement and
fusion projection in \eqref{eq:chasm_block}.

For the CH-plane pass, the FFT and inverse FFT cost
$\mathcal{O}(HWC\log H)$, and the channel mixing costs
$\mathcal{O}(WK_HC^2)$ when implemented as
$\mathbf{U}\operatorname{diag}(\boldsymbol{\lambda}_{k})\mathbf{U}^{\top}$.
For the CW-plane pass, the corresponding costs are
$\mathcal{O}(HWC\log W)$ and $\mathcal{O}(HK_WC^2)$. The total spectral
core cost is therefore
\begin{equation}
    \mathcal{O}\!\left(
    HWC(\log H+\log W)
    +
    (WK_H+HK_W)C^2
    \right),
    \label{eq:chasm_complexity}
\end{equation}
plus the cost of constructing the two orthogonal bases. The matrix
exponentials add an $\mathcal{O}(C^3)$ basis-construction cost per
forward pass. This basis-construction cost is independent of the number
of spatial positions and is amortized over all frequency bins and
locations.

\section{Experiments}
\label{sec:experiments}

We evaluate CHASM under a controlled mixer-replacement protocol across
cross-backbone MRI reconstruction, cross-domain transfer, and mechanistic
ablations. All methods replace the same mixer location inside a fixed backbone,
and each baseline is evaluated as its native complete mixer block with intrinsic
components retained. The ablations isolate the effects of the shared-basis
constraint and coherent sampling geometry by keeping the CHASM wrapper and
training protocol fixed while changing only the spectral-core constraint or the
sampling mask. Additional stability results, ablation definitions, computational
cost, reproducibility details, cross-domain protocols, and baseline taxonomy are
provided in Appendix~\ref{app:additional}.

\noindent\textbf{Protocol and metrics.}
We evaluate HiFi-Mamba-family, U-Net-based, and ViT-based backbones. For
reconstruction, we report PSNR and SSIM; for segmentation, we report mIoU and
Dice. Main test-set tables report case-level variation, while run-to-run
stability over random seeds is reported separately in
Appendix~\ref{app:seed_stability}.

\begin{table*}[h]
\centering
\footnotesize
\setlength{\tabcolsep}{2.4pt}
\renewcommand{\arraystretch}{0.95}

\providecommand{\mstd}[2]{#1\raisebox{-0.35ex}{\tiny$\pm#2$}}
\providecommand{\bmstd}[2]{\textbf{#1}\raisebox{-0.35ex}{\tiny$\pm#2$}}

\caption{
CC359 accelerated MRI reconstruction under fourfold and eightfold
undersampling. All methods replace the same mixer block. Core Params are
reported at channel dimension \(C=96\); model-level FLOPs are provided in
Appendix~\ref{app:efficiency}. Values are reported as mean with case-level standard deviation shown in smaller text.
}
\label{tab:cc359_recon}
\begin{tabular}{c l c cc cc cc}
\toprule
\multirow{2}{*}{AR} & \multirow{2}{*}{Method}
& \multirow{2}{*}{Core Params}
& \multicolumn{2}{c}{HiFi-Mamba}
& \multicolumn{2}{c}{U-Net}
& \multicolumn{2}{c}{ViT} \\
\cmidrule(lr){4-5}
\cmidrule(lr){6-7}
\cmidrule(lr){8-9}
&
&
& PSNR$\uparrow$ & SSIM$\uparrow$
& PSNR$\uparrow$ & SSIM$\uparrow$
& PSNR$\uparrow$ & SSIM$\uparrow$ \\
\midrule

\multirow{9}{*}{$\times8$}
& Baseline
& 0.00M
& \mstd{26.82}{0.72} & \mstd{0.791}{0.015}
& \mstd{24.79}{0.57} & \mstd{0.737}{0.014}
& \mstd{24.16}{0.55} & \mstd{0.722}{0.014} \\

& GFNet
& 0.80M
& \mstd{28.13}{0.75} & \mstd{0.826}{0.015}
& \mstd{25.38}{0.60} & \mstd{0.751}{0.014}
& \mstd{24.70}{0.57} & \mstd{0.733}{0.014} \\

& AFNO
& 0.01M
& \mstd{27.30}{0.73} & \mstd{0.805}{0.015}
& \mstd{25.28}{0.59} & \mstd{0.748}{0.014}
& \mstd{24.66}{0.57} & \mstd{0.732}{0.014} \\

& FFC
& 1.19M
& \mstd{27.32}{0.73} & \mstd{0.805}{0.015}
& \mstd{25.13}{0.59} & \mstd{0.744}{0.014}
& \mstd{24.50}{0.56} & \mstd{0.729}{0.014} \\

& FNO-D
& 4.74M
& \mstd{26.98}{0.72} & \mstd{0.796}{0.015}
& \mstd{25.05}{0.59} & \mstd{0.742}{0.014}
& \mstd{24.62}{0.57} & \mstd{0.731}{0.014} \\

& DiffFNO
& 4.77M
& \mstd{27.92}{0.74} & \mstd{0.819}{0.015}
& \mstd{25.31}{0.60} & \mstd{0.749}{0.014}
& \mstd{24.58}{0.57} & \mstd{0.730}{0.014} \\

& FSEL
& 0.06M
& \mstd{28.16}{0.75} & \mstd{0.827}{0.015}
& \mstd{25.42}{0.60} & \mstd{0.752}{0.014}
& \mstd{24.55}{0.57} & \mstd{0.730}{0.014} \\

& SpectFormer
& 0.02M
& \mstd{28.10}{0.75} & \mstd{0.825}{0.015}
& \mstd{25.36}{0.60} & \mstd{0.751}{0.014}
& \mstd{24.78}{0.58} & \mstd{0.735}{0.014} \\

& \textbf{CHASM}
& 0.07M
& \bmstd{28.68}{0.74} & \bmstd{0.840}{0.016}
& \bmstd{25.72}{0.61} & \bmstd{0.761}{0.014}
& \bmstd{25.19}{0.59} & \bmstd{0.745}{0.014} \\

\midrule

\multirow{9}{*}{$\times4$}
& Baseline
& 0.00M
& \mstd{34.53}{0.75} & \mstd{0.936}{0.009}
& \mstd{28.81}{0.60} & \mstd{0.852}{0.010}
& \mstd{27.86}{0.58} & \mstd{0.830}{0.010} \\

& GFNet
& 0.80M
& \mstd{35.70}{0.75} & \mstd{0.947}{0.009}
& \mstd{29.55}{0.62} & \mstd{0.866}{0.010}
& \mstd{28.62}{0.59} & \mstd{0.844}{0.010} \\

& AFNO
& 0.01M
& \mstd{35.61}{0.75} & \mstd{0.941}{0.009}
& \mstd{29.31}{0.61} & \mstd{0.860}{0.010}
& \mstd{28.36}{0.59} & \mstd{0.838}{0.010} \\

& FFC
& 1.19M
& \mstd{35.62}{0.75} & \mstd{0.946}{0.009}
& \mstd{29.24}{0.61} & \mstd{0.858}{0.010}
& \mstd{28.45}{0.59} & \mstd{0.840}{0.010} \\

& FNO-D
& 4.74M
& \mstd{34.77}{0.75} & \mstd{0.939}{0.009}
& \mstd{29.20}{0.61} & \mstd{0.858}{0.010}
& \mstd{28.34}{0.59} & \mstd{0.838}{0.010} \\

& DiffFNO
& 4.77M
& \mstd{34.13}{0.74} & \mstd{0.933}{0.009}
& \mstd{29.28}{0.61} & \mstd{0.859}{0.010}
& \mstd{28.28}{0.59} & \mstd{0.836}{0.010} \\

& FSEL
& 0.06M
& \mstd{36.22}{0.76} & \mstd{0.951}{0.008}
& \mstd{29.61}{0.62} & \mstd{0.867}{0.010}
& \mstd{28.56}{0.59} & \mstd{0.843}{0.010} \\

& SpectFormer
& 0.02M
& \mstd{35.91}{0.75} & \mstd{0.948}{0.008}
& \mstd{29.50}{0.62} & \mstd{0.865}{0.010}
& \mstd{28.65}{0.59} & \mstd{0.845}{0.010} \\

& \textbf{CHASM}
& 0.07M
& \bmstd{37.05}{0.76} & \bmstd{0.956}{0.007}
& \bmstd{30.03}{0.63} & \bmstd{0.875}{0.009}
& \bmstd{29.10}{0.60} & \bmstd{0.856}{0.009} \\

\bottomrule
\end{tabular}
\end{table*}

\subsection{Accelerated MRI reconstruction}
\label{sec:mri_reconstruction}

We first evaluate accelerated MRI reconstruction on CC359~\citep{souza2018open}
and fastMRI~\citep{zbontar2018fastmri}. All methods share the same setting. Table~\ref{tab:cc359_recon} reports CC359 results across three backbone
families. CHASM improves over same-backbone spectral-mixer blocks
under both $4\times$ and $8\times$ acceleration, indicating that the
shared-basis spectral operator is not tied to a single backbone. The gain persists across recurrent/state-space-style, convolutional, and
Transformer-style hosts, supporting the use of CHASM as a replacement spectral
mixer rather than a backbone-specific modification.

\subsection{Additional reconstruction and segmentation benchmarks}
\label{sec:additional_benchmarks}
\noindent\textbf{Role of the cross-domain evaluation.}
The additional tasks in Table~\ref{tab:unified_cross_domain_results} are used
as controlled transfer tests rather than task-specific state-of-the-art
benchmarks. In each setting, the backbone, degradation or undersampling
operator, objective, and training protocol are fixed across mixer blocks, so
the comparison isolates whether the shared-basis spectral core remains
beneficial outside the primary accelerated-MRI reconstruction setting. Detailed
protocol settings are summarized in Appendix~\ref{app:reproducibility}.

To test whether CHASM is specific to a single MRI dataset, we further
evaluate fastMRI Knee reconstruction with the HiFi-Mamba-family backbone.
We also evaluate natural-image reconstruction on ADE20K~\citep{zhou2017scene}
and ImageNet~\citep{deng2009imagenet}, and undersampled brain tumor
segmentation on BraTS~\citep{menze2015multimodal}. For natural images,
all methods use the same downsampling degradation and reconstruction
objective. For BraTS, inputs are retrospectively undersampled and models
predict tumor segmentation masks.

\begin{table*}[t]
\centering
\footnotesize
\setlength{\tabcolsep}{3pt}
\renewcommand{\arraystretch}{1.1}
\caption{
Unified evaluation across fastMRI reconstruction, natural-image
reconstruction, and undersampled MRI segmentation. The last row reports the maximum case-level standard deviation among all methods
under each setting.
}
\label{tab:unified_cross_domain_results}
\begin{tabular}{l|cc|cc|cc|cc|cc|cc}
\toprule
\multirow{2}{*}{Mixer block}
& \multicolumn{2}{c|}{fastMRI $\times8$}
& \multicolumn{2}{c|}{fastMRI $\times4$}
& \multicolumn{2}{c|}{ADE20K}
& \multicolumn{2}{c|}{ImageNet}
& \multicolumn{2}{c|}{BraTS $\times4$}
& \multicolumn{2}{c}{BraTS $\times8$} \\
\cmidrule(lr){2-3}
\cmidrule(lr){4-5}
\cmidrule(lr){6-7}
\cmidrule(lr){8-9}
\cmidrule(lr){10-11}
\cmidrule(lr){12-13}
& PSNR$\uparrow$ & SSIM$\uparrow$
& PSNR$\uparrow$ & SSIM$\uparrow$
& PSNR$\uparrow$ & SSIM$\uparrow$
& PSNR$\uparrow$ & SSIM$\uparrow$
& mIoU$\uparrow$ & Dice$\uparrow$
& mIoU$\uparrow$ & Dice$\uparrow$ \\
\midrule
Baseline
& 31.24 & 0.741
& 34.31 & 0.849
& 23.67 & 0.750
& 27.89 & 0.835
& 0.727 & 0.841
& 0.612 & 0.757 \\

GFNet
& 31.56 & 0.761
& 34.50 & 0.853
& 23.97 & 0.762
& 28.20 & 0.846
& 0.737 & 0.848
& 0.632 & 0.773 \\

AFNO
& 31.32 & 0.752
& 34.47 & 0.850
& 23.74 & 0.753
& 27.97 & 0.838
& 0.731 & 0.844
& 0.621 & 0.765 \\

FFC
& 31.40 & 0.757
& 34.51 & 0.852
& 23.73 & 0.752
& 27.95 & 0.837
& 0.730 & 0.843
& 0.619 & 0.763 \\

FNO-D
& 31.32 & 0.756
& 34.49 & 0.852
& 23.76 & 0.755
& 28.00 & 0.840
& 0.733 & 0.845
& 0.625 & 0.768 \\

DiffFNO
& 31.28 & 0.755
& 34.59 & 0.854
& 23.79 & 0.756
& 28.04 & 0.841
& 0.735 & 0.846
& 0.628 & 0.770 \\

FSEL
& 31.61 & 0.761
& 34.57 & 0.854
& 23.98 & 0.765
& 28.22 & 0.849
& 0.739 & 0.850
& 0.639 & 0.778 \\

SpectFormer
& 31.76 & 0.764
& 34.65 & 0.855
& 24.01 & 0.766
& 28.25 & 0.850
& 0.742 & 0.853
& 0.643 & 0.781 \\

\midrule
\textbf{CHASM}
& \textbf{32.31} & \textbf{0.779}
& \textbf{35.22} & \textbf{0.872}
& \textbf{24.52} & \textbf{0.783}
& \textbf{28.58} & \textbf{0.864}
& \textbf{0.792} & \textbf{0.870}
& \textbf{0.699} & \textbf{0.811} \\

\emph{Max std.}
& 1.85 & 0.076
& 2.24 & 0.052
& 1.80 & 0.045
& 1.65 & 0.040
& 0.035 & 0.030
& 0.050 & 0.045 \\
\bottomrule
\end{tabular}
\end{table*}

\begin{figure*}[t]
    \centering
    \includegraphics[width=\linewidth]{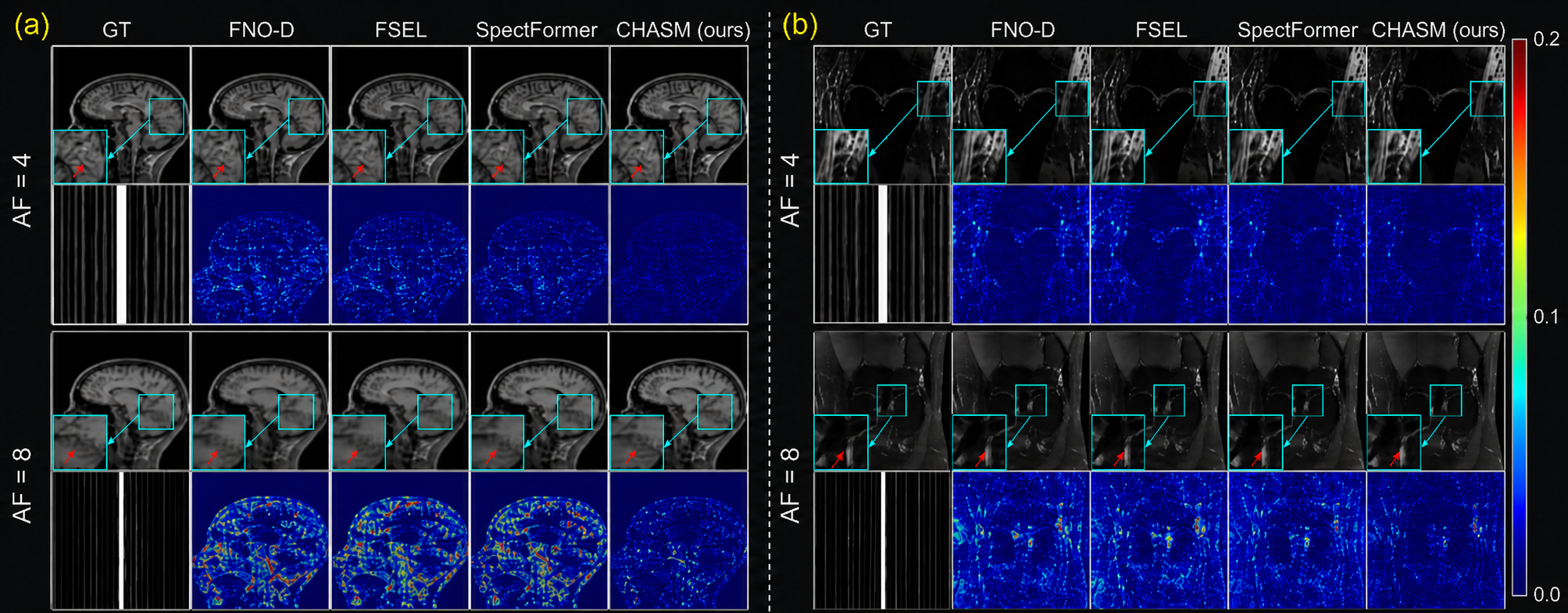}
    \caption{Qualitative comparison on the fastMRI and CC359 datasets under single-coil settings. (a) Reconstruction results on the fastMRI knee dataset with acceleration factors AF=4 and AF=8. (b) Reconstruction results on the CC359 brain dataset under the same acceleration factors. The second row of each subplot shows the corresponding error maps. The blue boxes and red arrow highlight the details in the reconstruction results.}
    \label{fig:qualitative}
\end{figure*}

Table~\ref{tab:unified_cross_domain_results} shows that CHASM improves
fastMRI reconstruction, natural-image reconstruction, and undersampled
MRI segmentation, suggesting that cross-frequency harmonization is useful
beyond a single dataset or task. Figure~\ref{fig:qualitative} further provides qualitative reconstruction examples on fastMRI and CC359. The error maps show that CHASM reduces residual
aliasing and better preserves fine anatomical structures under both fourfold and
eightfold acceleration.

\subsection{Mechanistic analysis}
\label{sec:mechanistic}

We next test whether the gain comes from the proposed cross-frequency
harmonization. Unless otherwise stated, ablations are conducted on CC359
\(\times8\) reconstruction with the HiFi-Mamba-family backbone. The
shared-basis ablation keeps the CHASM wrapper, local refinement, fusion
projection, and training protocol fixed while changing only the spectral core;
the random-mask study changes the sampling geometry and measures how the CHASM
gain varies with spectral structure. Definitions of the ablation variants and
three-seed stability for the shared-basis ablation are provided in
Appendix~\ref{app:ablation_definitions} and
Appendix~\ref{app:shared_basis_seed}.

Table~\ref{tab:chasm_random_ablation_compact}(a) shows that CHASM has a
large advantage under structured masks, but this advantage is
substantially reduced when the sampling geometry is randomized. This
supports the interpretation that cross-frequency harmonization is most
useful when the measurement process contains coherent spectral
relationships. Table~\ref{tab:chasm_random_ablation_compact}(b) shows
that removing the shared-basis constraint weakens performance even though
the rest of the CHASM block is unchanged, isolating cross-frequency
harmonization from local refinement or fusion effects. This suggests that
simply allowing frequency-specific channel directions is not sufficient; the
shared coordinate system itself contributes to the gain.
Table~\ref{tab:chasm_random_ablation_compact}(c) further shows that the
default CH$\rightarrow$CW axis-separable composition gives the strongest
performance among the tested variants.

\begin{table*}[t]
\centering
\footnotesize
\setlength{\tabcolsep}{3pt}
\renewcommand{\arraystretch}{1.1}
\caption{
Random-mask falsification and ablation studies with the HiFi-Mamba backbone.
(a) Random-mask falsification reports $\Delta$PSNR of CHASM over the
strongest spectral-mixer baseline. Structured-mask gains are computed from
the corresponding main reconstruction results. Random-mask gains are
measured by replacing structured masks with random masks under the same
model and training setting. Drop is computed as
$1{-}\Delta_{\mathrm{random}}/\Delta_{\mathrm{structured}}$.
(b) Spectral-core basis constraints are evaluated on CC359 $\times8$.
(c) CH/CW axis compositions are evaluated on CC359 $\times8$.
}
\label{tab:chasm_random_ablation_compact}

\begin{minipage}[t]{0.305\textwidth}
\centering
\textbf{(a) Random-mask falsification}
\vspace{0.4ex}

\begin{tabular}{@{}lccc@{}}
\toprule
Setting & Struct. & Random & Drop \\
\midrule
CC359 $\times4$   & +0.83 & +0.10 & 88.0\% \\
CC359 $\times8$   & +0.52 & +0.08 & 84.6\% \\
fastMRI $\times4$ & +0.57 & +0.08 & 86.0\% \\
fastMRI $\times8$ & +0.55 & +0.06 & 89.1\% \\
\textbf{Mean} & \textbf{+0.62} & \textbf{+0.08} & \textbf{86.9\%} \\
\bottomrule
\end{tabular}
\end{minipage}
\hspace{0.18in}
\begin{minipage}[t]{0.325\textwidth}
\centering
\textbf{(b) Shared-basis constraint}
\vspace{0.4ex}

\begin{tabular}{@{}lcc@{}}
\toprule
Variant & PSNR$\uparrow$ & SSIM$\uparrow$ \\
\midrule
Identity basis            & 27.86 & 0.820 \\
Untied basis              & 28.43 & 0.833 \\
Shared basis, signed gain & 28.51 & 0.836 \\
Shared basis, complex     & 28.56 & 0.838 \\
\textbf{CHASM}            & \textbf{28.68} & \textbf{0.840} \\
\bottomrule
\end{tabular}
\end{minipage}
\hspace{0.07in}
\begin{minipage}[t]{0.305\textwidth}
\centering
\textbf{(c) Axis factorization}
\vspace{0.4ex}

\begin{tabular}{@{}lcc@{}}
\toprule
Variant & PSNR$\uparrow$ & SSIM$\uparrow$ \\
\midrule
CH only             & 28.31 & 0.831 \\
CW only             & 28.27 & 0.829 \\
CW $\rightarrow$ CH & 28.54 & 0.837 \\
CH $+$ CW           & 28.48 & 0.835 \\
\textbf{CH $\rightarrow$ CW} & \textbf{28.68} & \textbf{0.840} \\
\bottomrule
\end{tabular}
\end{minipage}

\end{table*}

\section{Conclusion and Limitations}
\label{sec:conclusion}

We presented CHASM, a structured spectral mixer that harmonizes channel
directions across frequencies while preserving frequency-specific gains. By
using a shared learned eigenbasis and frequency-dependent positive gains, CHASM
provides a simple way to coordinate spectral channel mixing without removing
local frequency adaptivity. Across accelerated MRI reconstruction, natural-image
reconstruction, and undersampled MRI segmentation, CHASM improves over
same-backbone spectral-mixer baselines under controlled replacement settings.
The ablations further show that both the shared-basis constraint and coherent
sampling geometry contribute to the observed gains, supporting cross-frequency
harmonization as a useful design principle for spectral token operators. This work is limited to image-domain reconstruction and segmentation. Future
work can study broader dense prediction, real sensor-noise restoration, video
reconstruction, and a deeper characterization of when cross-frequency mixing is
most effective across sampling patterns, feature resolutions, and backbones.

\bibliographystyle{plainnat}
\bibliography{references}

\appendix

\section{Additional Experimental Details}
\label{app:additional}

\subsection{Seed Stability}
\label{app:seed_stability}

We evaluate run-to-run stability on the main MRI reconstruction settings.
We report three-seed stability for CHASM and the strongest spectral-mixer
baseline in each main MRI reconstruction setting. This appendix reports
seed-level standard deviations, separate from the case-level standard deviations
used in the main test-set tables.

\begin{table}[h]
\centering
\footnotesize
\caption{
Three-seed stability on the main MRI reconstruction settings. CHASM is compared
against the strongest spectral-mixer baseline in each setting. Values are
reported as mean $\pm$ standard deviation.
}
\label{tab:seed_stability}
\begin{tabular}{llccccc}
\toprule
Setting & Best baseline
& Base PSNR$\uparrow$ & CHASM PSNR$\uparrow$ & $\Delta$PSNR
& Base SSIM$\uparrow$ & CHASM SSIM$\uparrow$ \\
\midrule
CC359 $\times8$ & FSEL
& 28.16$\pm$0.03 & \textbf{28.68$\pm$0.02} & \textbf{+0.52}
& 0.827$\pm$0.002 & \textbf{0.840$\pm$0.001} \\

CC359 $\times4$ & FSEL
& 36.22$\pm$0.05 & \textbf{37.05$\pm$0.03} & \textbf{+0.83}
& 0.951$\pm$0.001 & \textbf{0.956$\pm$0.001} \\

fastMRI $\times8$ & SpectFormer
& 31.76$\pm$0.04 & \textbf{32.31$\pm$0.05} & \textbf{+0.55}
& 0.764$\pm$0.002 & \textbf{0.779$\pm$0.003} \\

fastMRI $\times4$ & SpectFormer
& 34.65$\pm$0.02 & \textbf{35.22$\pm$0.03} & \textbf{+0.57}
& 0.855$\pm$0.001 & \textbf{0.872$\pm$0.002} \\
\bottomrule
\end{tabular}
\end{table}

\subsection{Shared-Basis Ablation Stability}
\label{app:shared_basis_seed}

We additionally evaluate the stability of the shared-basis ablation on the
main CC359 $\times8$ reconstruction setting. The CHASM wrapper, local
refinement, fusion projection, optimizer, data split, mask, and training
schedule are kept fixed; only the spectral-core basis constraint is changed.

\begin{table}[h]
\centering
\footnotesize
\caption{
Three-seed stability for the shared-basis ablation on CC359 $\times8$ with the
HiFi-Mamba-family reconstruction backbone. Values are reported as
mean $\pm$ standard deviation.
}
\label{tab:seed_shared_basis}
\begin{tabular}{lcc}
\toprule
Variant & PSNR$\uparrow$ & SSIM$\uparrow$ \\
\midrule
Untied basis & 28.43$\pm$0.04 & 0.833$\pm$0.002 \\
CHASM        & \textbf{28.68$\pm$0.02} & \textbf{0.840$\pm$0.001} \\
\bottomrule
\end{tabular}
\end{table}

\subsection{Ablation Variant Definitions}
\label{app:ablation_definitions}

\paragraph{Basis constraints.}
The identity-basis variant fixes the shared basis to \(U=I\) and learns only
the frequency-specific gains. The untied-basis variant removes the shared-basis
constraint by learning frequency-indexed channel operators without a common
basis, while keeping the CHASM wrapper, local refinement, fusion projection,
optimizer, mask, split, and training schedule fixed. The signed-gain variant
keeps the shared basis but removes the positivity constraint on the gains,
replacing the softplus mapping with unconstrained real-valued gains. The complex
variant replaces the real-valued spectral-core gains with complex-valued
frequency-domain gains while preserving the real-output constraint of the
inverse real Fourier transform.

\paragraph{Axis compositions.}
The CH-only and CW-only variants apply the harmonized operator only along the
height--channel or width--channel plane, respectively. The CW\(\rightarrow\)CH
and CH\(\rightarrow\)CW variants apply the two axis-wise operators sequentially
in the indicated order. The CH+CW variant applies the CH-plane and CW-plane
operators in parallel and combines their outputs before the local refinement and
fusion projection.

\subsection{Computational Cost}
\label{app:efficiency}

\begin{table}[H]
\centering
\footnotesize
\caption{
Model-level FLOPs for CC359 reconstruction backbones after replacing the mixer
block. FLOPs are measured under the same input setting as Table~\ref{tab:cc359_recon}
and are reported for reference. They are deterministic for a fixed input size
and independent of random seed.
}
\label{tab:model_flops}
\begin{tabular}{lccc}
\toprule
Mixer block & HiFi-Mamba & U-Net & ViT \\
\midrule
Baseline    & 24.64G & 12.56G & 11.54G \\
GFNet       & 24.62G & 12.56G & 11.01G \\
AFNO        & 24.62G & 12.56G & 11.01G \\
FFC         & 25.57G & 12.74G & 11.13G \\
FNO-D      & 29.06G & 13.36G & 10.56G \\
DiffFNO     & 21.26G & 11.88G & 11.01G \\
FSEL        & 34.96G & 14.71G & 12.44G \\
SpectFormer & 26.43G & 12.96G & 11.27G \\
CHASM       & 28.22G & 13.27G & 11.48G \\
\bottomrule
\end{tabular}
\end{table}

\subsection{Reproducibility Details}
\label{app:reproducibility}

All comparisons use the same backbone, dataset split, undersampling mask,
training objective, optimizer, and training schedule within each experimental
setting. Each method replaces the same mixer location as a complete
non-cumulative mixer block. Components intrinsic to each baseline mixer, such
as projection layers, local convolutional branches, or frequency-selection
modules, are retained. We do not attach CHASM-specific local refinement to other
baselines. Baseline mixer configurations follow their original implementations, with only channel- and resolution-dependent dimensions adjusted to match the host backbone.

\paragraph{Datasets.}
We evaluate accelerated MRI reconstruction on CC359 and fastMRI, natural-image
reconstruction on ADE20K and ImageNet, and undersampled MRI segmentation on
BraTS. For MRI reconstruction, the same acceleration factors and undersampling
protocols are used for all compared methods. For natural-image reconstruction,
all methods use the same degradation operator and reconstruction loss. For
BraTS segmentation, inputs are retrospectively undersampled and models predict
tumor segmentation masks.

\paragraph{Cross-domain settings.}
For Table~\ref{tab:unified_cross_domain_results}, fastMRI reconstruction uses
the HiFi-Mamba-family reconstruction backbone. ADE20K and ImageNet
reconstruction use the same U-Net-style reconstruction backbone with bicubic
downsampling at \(4\times\), trained with an \(\ell_1\) reconstruction loss.
BraTS segmentation uses a U-Net segmentation backbone with retrospective
Cartesian undersampling at \(4\times\) and \(8\times\), trained with a
Dice--cross-entropy loss.

\paragraph{Metrics.}
For reconstruction, we report PSNR and SSIM. For segmentation, we report mIoU
and Dice. Higher is better for PSNR, SSIM, mIoU, and Dice.

\paragraph{Training protocol.}
All methods are trained under matched settings within each task. Unless
otherwise specified, we use AdamW with a learning rate of \(5\times10^{-4}\),
batch size \(8\), and train for \(100\) epochs. The same training and validation
splits, undersampling masks, loss function, optimizer, learning-rate setting,
batch size, training length, and data augmentation are used for all mixer blocks
within each experimental setting. Hyperparameters are fixed across methods and
test results are reported using the corresponding validation-selected checkpoint.

\paragraph{Compute.}
Experiments are run on NVIDIA A100 and H100 GPU workers. Unless otherwise
specified, all methods are trained with the same optimizer, batch size, and
training length within each experimental setting. Since all methods are evaluated as replacement mixer blocks inside the same
backbone, we report model-level FLOPs under the matched input setting in
Appendix~\ref{app:efficiency}. FLOPs are deterministic for a fixed input size
and are independent of random seed. Wall-clock
runtime can vary with FFT kernels, hardware utilization, and implementation
details, so we use FLOPs as the main hardware-independent efficiency indicator.

\paragraph{Implementation.}
CHASM uses real FFT and inverse real FFT along spatial axes, implemented with
\texttt{torch.fft.rfft} and \texttt{torch.fft.irfft}. The channel operators are
real-valued and applied to complex Fourier coefficients by complex linear
extension, equivalently applying the same real matrix to the real and imaginary
parts. The orthogonal bases are constructed from skew-symmetric matrices through
the matrix exponential, and frequency-specific gains are mapped to positive
values using softplus. The learned gain tables are interpolated to the retained rFFT frequency bins used by the input resolution.

\paragraph{Block wrapper details.}
The local refinement module \(\mathcal{L}\) is implemented as a depthwise
\(3\times3\) convolution with padding 1 followed by a GELU activation. The
fusion projection \(\mathbf{P}_{\mathrm{fuse}}\) is a \(1\times1\) convolution
that maps the refined features back to \(C\) channels. No channel expansion is
used in the wrapper. In the mechanistic ablations, \(\mathcal{L}\),
\(\mathbf{P}_{\mathrm{fuse}}\), optimizer, mask, split, and training schedule
are kept fixed; only the spectral-core constraint is changed.

\paragraph{Cross-domain protocol details.}
Table~\ref{tab:cross_domain_protocol} summarizes the protocol used for the
cross-domain transfer experiments in Table~\ref{tab:unified_cross_domain_results}. These
experiments are intended to test whether the same mixer-replacement principle
remains effective beyond the primary accelerated-MRI reconstruction setting,
rather than to establish task-specific state of the art. Within each setting,
all compared mixer blocks use the same backbone, input operator, objective,
training split, optimizer, and schedule, and replace the same mixer location.
Thus, the comparison isolates the effect of the spectral mixer under matched
cross-domain conditions.

\begin{table}[h]
\centering
\footnotesize
\caption{Protocol details for the cross-domain transfer evaluation.}
\label{tab:cross_domain_protocol}
\begin{tabular}{lcccc}
\toprule
Setting & Backbone & Input operator & Objective & Purpose \\
\midrule
fastMRI & HiFi-Mamba-family & Cartesian undersampling & reconstruction loss & MRI transfer \\
ADE20K & U-Net-style & $4\times$ bicubic downsampling & $\ell_1$ & natural-image reconstruction \\
ImageNet & U-Net-style & $4\times$ bicubic downsampling & $\ell_1$ & natural-image reconstruction \\
BraTS & U-Net segmentation & Cartesian undersampling & Dice--CE & undersampled segmentation \\
\bottomrule
\end{tabular}
\end{table}

\subsection{Taxonomy of Spectral Mixer Baselines}
\label{app:spectral_taxonomy}

Table~\ref{tab:spectral_taxonomy} summarizes how CHASM differs from existing
spectral mixers: it explicitly imposes a shared channel eigenbasis across
frequencies while preserving frequency-specific gains.

\begin{table}[h]
\centering
\footnotesize
\caption{
Taxonomy of spectral mixer designs. ``Shared basis'' means that the method
directly ties channel directions across frequency bins through a learned shared
eigenbasis.
}
\label{tab:spectral_taxonomy}
\begin{tabular}{lcccc}
\toprule
Mixer & Spectral response & Channel mixing & Shared basis & Positive gains \\
\midrule
GFNet & learned filters & diagonal/filter-wise & no & no \\
AFNO & adaptive Fourier modes & block/channel & no explicit & no \\
FFC & Fourier branch & conv./spectral & no explicit & no \\
FNO-D & learned FNO response & mode-wise & no explicit & no \\
DiffFNO & dynamic Fourier response & frequency-conditioned & no explicit & no \\
FSEL & frequency selection & learned enhancement & no explicit & no \\
SpectFormer & spectral attention & token/channel & no explicit & no \\
\textbf{CHASM} & shared-basis gains & symmetric channel & \textbf{yes} & \textbf{yes} \\
\bottomrule
\end{tabular}
\end{table}

GFNet-style mixers learn frequency-domain filters that give each Fourier bin a
learnable response, but the response is not parameterized through common channel
directions shared across the spectrum. AFNO-style mixers introduce adaptive
Fourier mixing and structured channel operations over selected modes, but the
per-frequency transformations are not constrained to be simultaneously
diagonalizable by a shared learned eigenbasis. FNO variants learn spectral
operator responses over modes, and deformation or dynamic variants can improve
adaptivity, but they still do not explicitly tie frequency-indexed channel
operators through shared eigenvectors. Spectral-attention methods can introduce
implicit coupling through attention, but this differs from the direct
shared-basis structure used by CHASM.

CHASM is therefore complementary to these designs. It keeps the useful property
of frequency-specific responses, but changes how the responses are coordinated:
all frequencies use comparable channel directions, and only their gains along
those directions vary with frequency.


\end{document}